\newtheorem{lemma}{Lemma}
\newtheorem{theorem}{Theorem}
\newtheorem{remark}{Remark}
\newtheorem{problem}{Problem}
\def\BibTeX{{\rm B\kern-.05em{\sc i\kern-.025em b}\kern-.08em
    T\kern-.1667em\lower.7ex\hbox{E}\kern-.125emX}}
\begin{document}
\title{Trajectory Tracking with Reachability-Guided Quadratic Programming and Freeze-Resume}

\author{Hossein Gholampour, \IEEEmembership{Student Member, IEEE}, 
        Logan E. Beaver, \IEEEmembership{Member, IEEE}
\thanks{H. Gholampour and L. E. Beaver are with the Department of Mechanical and Aerospace Engineering,
Old Dominion University, Norfolk, VA, USA (e-mail: {mghol004,lbeaver}@odu.edu).}}
\maketitle

\begin{abstract}
Many robotic systems must follow planned paths yet pause safely and resume when people or objects intervene. We present an output–space method for systems whose tracked output can be feedback–linearized to a double integrator (e.g., manipulators). The approach has two parts. \emph{Offline}, we perform a pre-run reachability check to verify that the motion plan respects speed and acceleration magnitude limits. \emph{Online}, we apply a quadratic program to track the motion plan under the same limits. We use a one-step reachability test to bound the maximum disturbance the system is capable of rejecting.
When the state coincides with the reference path we recover perfect tracking in the deterministic case, and we correct errors using KKT-inspired weight. 
We demonstrate that safety stops and unplanned deviations are handled efficiently, and the system returns to the motion plan without replanning.
We demonstrate our system's improved performance over pure pursuit in simulation.

\end{abstract}

\begin{IEEEkeywords}
collaborative robots, constrained motion planning, motion control, optimization and optimal control, physical human--robot interaction, robot safety
\end{IEEEkeywords}

\section{Introduction}
\label{sec:introduction}

\IEEEPARstart{M}{any} modern motion systems must follow planned paths while remaining safe around people and objects in their workspace. In collaborative robotics, this often means intentional or unintentional stoppages—force thresholds, emergency stops, or incidental bumps—that displace the system from its path. After such events, the controller must resume from off–path without significant replanning while respecting state and control constraints in real time.

Compliance at the interaction layer—impedance or admittance control—remains the standard way to regulate contact forces in human-robot interaction \cite{hogan1985impedance}, and it is widely used in manipulation and grasping \cite{mohammadi2024development} with growing emphasis on safety in cobots \cite{chemweno2020orienting}. For path execution, geometric trackers such as pure pursuit remove explicit time dependence by steering toward a look-ahead point; path-feasibility and time-scaling approaches test whether a planned path can be traversed under speed/acceleration bounds, with reachability-based time-optimal path parameterization (TOPP) has made these checks efficient and robust \cite{pham2018new}. Model predictive control (MPC) offers constraint handling but can be heavy for high-rate, long-horizon use or when frequent stops force repeated warm starts and replans.

Two gaps motivate this work. First, pure pursuit typically enforces limits by a posteriori saturation, which creates significant error near high curvature and after pauses \cite{coulter1992,paden2016survey}. Second, reachability-based TOPP assumes uninterrupted motion and does not address mid-run halts \cite{pham2018new}. Real-time trajectories generators produce jerk-limited profiles  \cite{Berscheid2021JerklimitedRT} but usually in joint space with a prescribed time law. Constraint-aware MPC (including learned disturbances) can help \cite{kermanshah2024control}, yet it may require significant computational effort when the goal is to rejoin a path safely and quickly.
Related adaptive/robust strategies for multi-agent systems handle severe uncertainty and guarantee bounded errors \cite{norouzi2025novel, ren2008distributed}, yet they do not target single-agent path rejoining with per-sample feasibility.

We investigate two research questions in this work. \emph{Research Question 1}: Can a one-step reachability test, paired with an adaptive position/velocity weighting, reduce tracking error and limit violations relative to pure pursuit under stoppages and off-path restarts? \emph{Research Question 2}: Can such a scheme run in real time and resume the same path without replanning or overshooting?
To address these questions, we have developed an online tracking controller that can be applied to any system feedback linearizeable to a double integrator.
This includes manipulators in workspace coordinates, quadrotors, and mobile ground robots. 

We propose a two-part solution. \emph{Offline}, we check the planned path for passability under speed/acceleration \emph{magnitudes} to verify trajectory feasibility before execution, including a robust approach to disturbance estimation. \emph{Online}, we propose a QP-based tracking controller that enforces state and control bounds while tracking the reference trajectory using a look-ahead strategy.
We develop a one-step reachability margin that compares the acceleration required to land on that target with the allowable bound, which also acts as a buffer for bounded disturbances. 
If the system state coincides with the reference trajectory, we prove that we recover perfect tracking in the deterministic case.
Under stochastic disturbances, 
we include a velocity term with an adaptive weight (from a simple KKT-inspired update) to re-enter the reachable set smoothly.
We demonstrate our QP's ability to track trajectories in simulation by including an emergency braking event, which drives the system off of the state trajectory for some time.

The remainder of the paper is organized as follows. Sec. \ref{sec:problem} states the model and freeze–resume problem, Sec. \ref{sec:method} derives the reachability-aware QP, Sec. \ref{sec:setup} describes the simulation setup, and Sec. \ref{sec:results} reports results; Sec.\ref{sec:conclusion} concludes.

\section{Problem Formulation} \label{sec:problem}
\subsubsection*{Dynamics and Constraints}
We start at the output level with a relative–degree–two model with small bounded perturbations.
As a running example, consider an $n$~DOF manipulator tracking a task–space (Cartesian) path at its end–effector; the same formulation applies to Cartesian stages and similar systems.

Let $\mathbf p(t),\mathbf v(t)\in\mathbb R^3$ denote output position and velocity and let $\mathbf u(t)\in\mathbb R^3$ be the commanded Cartesian acceleration. We use a relative–degree–two model with small bounded disturbances:

\begin{equation}
  \dot{\mathbf{p}}(t) = \mathbf{v}(t) + \mathbf{n}_p(t), 
  \qquad 
  \dot{\mathbf{v}}(t) = \mathbf{u}(t) + \mathbf{n}_v(t),
  \label{eq:dynamics}
\end{equation}

$\mathbf n_p(t)$ and $\mathbf n_v(t)$ are stochastic process/actuation noises. For robustness, we assume they are almost-surely bounded at each $t$
$\|\mathbf n_p(t)\|\le \varepsilon_p$ and $\|\mathbf n_v(t)\|\le \varepsilon_v$. 
These bounds will be used later only to construct a one–step margin for feasibility decisions; they are not treated as hard constraints.
\begin{equation}
  \|\mathbf{n}_p\| \le \varepsilon_p,\qquad 
  \|\mathbf{n}_v\| \le \varepsilon_v.
  \label{eq:noise-bounds}
\end{equation}

With sampling period $t_s$ and $t_k=k\,t_s$, the forward Euler update is
\begin{equation}
\begin{aligned}
    \mathbf{v}_{k+1} &= \mathbf{v}_k + \big(\mathbf{u}_k + \mathbf{n}_{v,k}\big)\, t_s,\\
    \mathbf{p}_{k+1} &= \mathbf{p}_k + \big(\mathbf{v}_k + \mathbf{n}_{p,k}\big)\, t_s  + \tfrac{1}{2}\big(\mathbf{u}_k + \mathbf{n}_{v,k}\big) t_s^2 .
\end{aligned}
\label{eq:discrete}
\end{equation}

Hardware and safety impose magnitude limits on speed and acceleration,
\begin{equation}
    \|\mathbf{v}(t)\| \le v_{\max}, 
    \qquad
    \|\mathbf{u}(t)\| \le a_{\max}, 
    \label{eq:constraints}
\end{equation}
which we will enforce \emph{inside} the online QP rather than by a posteriori saturation. In discrete time this amounts to choosing $\mathbf u_k$ from the intersection of the acceleration ball $\|\mathbf u_k\|\!\le\!a_{\max}$ and the one-step speed ball $\|\mathbf v_k+t_s\mathbf u_k\|\!\le\!v_{\max}$.

\vspace{0em}
\subsubsection*{Reference Path.}
The desired motion is a smooth, twice–differentiable parametric curve 
(e.g., Bézier or B–spline). The path may come from an offline planner and is fixed during execution.

\begin{equation}
    \mathbf{p}_{\mathrm{ref}} : [0, 1] \to \mathbb{R}^3
\end{equation} \label{ref-path}
which maps a normalized path parameter $s\!\in[0,1]$ to position in $\mathbb{R}^3$.

The tracking problem is to choose $\mathbf{u}(t)$ so that $(\mathbf{p}(t),\mathbf{v}(t))$ follows $\mathbf{p}_{\mathrm{ref}}(s)$ with small position and velocity error, while satisfying \eqref{eq:dynamics}–\eqref{eq:constraints} at all times.

\vspace{0em}
\subsubsection*{Freeze–Resume Scenario.}
We follow the same admittance-style structure as our prior work \cite{gholampour2025mass}. When a safety/interaction event occurs (e.g., human contact, collision avoidance, or emergency-stop), the system may \emph{freeze} by driving
\begin{equation}
    \mathbf{v} \to \mathbf{0}, \quad \mathbf{u} \to \mathbf{0}
\end{equation}

while maintaining compliance at the contact point, until a resume signal is given. On resumption, the controller must rejoin the original path with:
\begin{itemize}
    \item minimal overshoot in position and velocity,
    \item strict adherence to \eqref{eq:constraints},
    \item no replanning of the path.
\end{itemize}
This is challenging because an abrupt halt creates a mismatch to $\mathbf{p}_{\mathrm{ref}}(s)$ and $\mathbf{v}_{\mathrm{ref}}$ that simple trackers may not handle well.

\vspace{0em}
\subsubsection*{Assumptions.}
\begin{enumerate}
    \item \textbf{Output model:}
    In the operating region, the output behaves as in \eqref{eq:dynamics}, i.e., a double-integrator with small bounded perturbations $\mathbf{n}_p(t)$ and $\mathbf{n}_v(t)$.

    \item \textbf{Real-time feasibility:} 
    Computation/communication delays are negligible relative to $t_s$ (the control applies within the sample).

    \item \textbf{Path smoothness:} The reference path is twice–continuously differentiable, $ \mathbf{p}_{\mathrm{ref}} \in \mathcal C^2([0,1];\mathbb R^3)$ (e.g., a B-spline or Bézier constructed to be at least $\mathcal C^2$), so curvature and along-path acceleration are well defined.
    
\end{enumerate}

The above assumptions capture what we need for the method to run in real time and stay safe at the output level. Real-time feasibility keeps the QP inside the sampling budget, and path smoothness ensures curvature, velocity, and acceleration along $\mathbf{p}_{\mathrm{ref}}(s)$ are well defined for our checks.

\section{Solution} \label{sec:method}
Our controller has two parts: (i) an \emph{offline reachability} check that verifies the planned path admits a time–scaling under speed/acceleration limits and flags if the path is not reachable; and (ii) an \emph{online tracking controller} that, at each sample, solves a small QP in output space to compute the acceleration command (Cartesian for manipulators) while enforcing the same bounds. A one–step reachability margin decides whether to act as a position corrector or to blend in a velocity term with an adaptive weight. Freeze events are handled by braking to rest and then resuming the same controller without replanning. 
We present the online controller first, then derive our offline feasibility check using our derived reachability margin.

\vspace{0em}
\subsection*{Online tracking controller}
Given the current position $\mathbf{p}_k$, we first find the closest point on the path
\begin{equation}
    s_c = \arg\min_s \|\mathbf{p}_{\mathrm{ref}}(s) - \mathbf{p}_k\|,
    \label{eq:closest-point}
\end{equation}
then choose a look-ahead shift from the local reference speed
\begin{equation}
    v_{\mathrm{LA}} = \|\mathbf v_{\mathrm{ref}}(s_c)\|, 
    \qquad 
    s_{\mathrm{LA}} = v_{\mathrm{LA}}\, t_s.
    \label{eq:lookahead}
\end{equation}
The look-ahead position and velocity are
\begin{equation}
    \mathbf{p}_{\mathrm{LA}} = \mathbf{p}_{\mathrm{ref}}(s_c + s_{\mathrm{LA}}), 
    \quad
    \mathbf{v}_{\mathrm{LA}} = \mathbf{v}_{\mathrm{ref}}(s_c + s_{\mathrm{LA}}).
    \label{eq:LA-p&v}
\end{equation}

Using forward Euler, the one-step prediction errors are
\begin{align}
\mathbf e_p &= \mathbf{p}_{\mathrm{LA}} - \Big(\mathbf{p}_k + t_s \mathbf{v}_k + \tfrac{1}{2} t_s^2 \mathbf{u}_k\Big), \\
\mathbf e_v &= \mathbf{v}_{\mathrm{LA}} - \Big(\mathbf{v}_k + t_s \mathbf{u}_k\Big).
\label{eq:ep-ev}
\end{align}

\begin{problem} 
\label{prob:qp}
Given the current state $(\mathbf p_k,\mathbf v_k)$, the look-ahead pair 
$(\mathbf p_{\mathrm{LA}},\mathbf v_{\mathrm{LA}})$ from \eqref{eq:LA-p&v}, bounds 
$a_{\max},v_{\max}$ in \eqref{eq:constraints}, and weight $C$, compute the
acceleration vector $\mathbf u_k$ by
\begin{equation}
    \begin{aligned}
    \min_{\mathbf u_k}\quad & J=\|\mathbf e_p\|^2 + C_k\,\|\mathbf e_v\|^2 \\
    \text{s.t.}\quad & \|\mathbf u_k\|\le a_{\max},\qquad \|\mathbf v_k + t_s\mathbf u_k\|\le v_{\max}.
    \end{aligned}
    \label{eq:cost}
\end{equation}
\end{problem}

We enforce the next-step output-space limits through the constraints in \eqref{eq:cost}. Bounded disturbances are handled via a one-step robustness buffer in the reachability test, not as hard constraints in \eqref{eq:cost}.

Our one–step question is simple: \emph{can we land on the look-ahead point in the next sample without breaking the limits?}
Because the limits in \eqref{eq:constraints} depend only on magnitudes, the component of the one–step position error along the straight line to the look–ahead point is the critical one: any acceleration orthogonal to that line does not reduce the axial gap in one step.

The closest point on the path is $\mathbf{p}_{\mathrm{ref}}(s_c)$.

\begin{lemma}
\label{lem:axial}
Let 
\begin{equation}
    \mathbf r \coloneqq \mathbf p_{\mathrm{LA}} - \mathbf p_k - t_s\,\mathbf v_k, \qquad
    \hat{\mathbf e} = \frac{\mathbf r}{\|\mathbf r\|}, \quad \|\mathbf r\|>0.
    \label{eq:unit-direction}
\end{equation}
and the scalar projections
\begin{equation}
    p_{\mathrm{rel}}=\hat{\mathbf e}^{\!\top}\mathbf p_k,\quad
    p_{\mathrm{LA}} \,=\, \hat{\mathbf e}^{\!\top}\mathbf p_{\mathrm{LA}},\qquad
    v_{\mathrm{rel}}=\hat{\mathbf e}^{\!\top}\mathbf v_k .
    \label{eq:projection}
\end{equation}
%
The unconstrained minimizer of $\mathbf{e}_p$ is
\begin{equation}
    \mathbf u_k^* \,=\, \frac{2}{t_s^2}\,\mathbf r,
\end{equation}
with a magnitude of,
\begin{equation}
    \hat{\mathbf{e}}^\top \mathbf{u}_k^* = u_{\mathrm{req}} \,=\, \frac{2\big( p_{\mathrm{LA}} - p_{\mathrm{rel}} - v_{\mathrm{rel}} t_s \big)}{t_s^2}.
    \label{eq:ureq}
\end{equation}
\end{lemma}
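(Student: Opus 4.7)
The plan is to rewrite $\mathbf{e}_p$ so the dependence on $\mathbf{u}_k$ becomes transparent, then minimize a standard convex quadratic and project onto the unit direction $\hat{\mathbf e}$.

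First I would substitute the definition of $\mathbf r$ from \eqref{eq:unit-direction} into the expression for $\mathbf e_p$ to obtain
\[
\mathbf e_p \;=\; \mathbf r \;-\; \tfrac{1}{2}\,t_s^{2}\,\mathbf u_k,
\]
so that $\|\mathbf e_p\|^{2} = \|\mathbf r\|^{2} - t_s^{2}\,\mathbf r^{\top}\mathbf u_k + \tfrac{1}{4}t_s^{4}\|\mathbf u_k\|^{2}$. This is a strictly convex quadratic in $\mathbf u_k$ with Hessian $\tfrac{1}{2}t_s^{4}\mathbf I \succ 0$, so setting $\nabla_{\mathbf u_k}\|\mathbf e_p\|^{2}=0$ yields the unique unconstrained minimizer
\[
\mathbf u_k^{*} \;=\; \frac{2}{t_s^{2}}\,\mathbf r,
\]
which in fact drives $\mathbf e_p$ to zero (as expected, since $\mathbf u_k\in\mathbb R^{3}$ provides enough degrees of freedom to cancel the three-dimensional residual $\mathbf r$ in one step).

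Next I would compute the scalar projection along $\hat{\mathbf e}$. Since $\mathbf r=\|\mathbf r\|\,\hat{\mathbf e}$, we have
\[
\hat{\mathbf e}^{\top}\mathbf u_k^{*} \;=\; \frac{2}{t_s^{2}}\,\hat{\mathbf e}^{\top}\mathbf r \;=\; \frac{2}{t_s^{2}}\,\|\mathbf r\|.
\]
To recover the closed form in \eqref{eq:ureq}, I would expand $\hat{\mathbf e}^{\top}\mathbf r = \hat{\mathbf e}^{\top}\bigl(\mathbf p_{\mathrm{LA}}-\mathbf p_k - t_s\,\mathbf v_k\bigr)$ and substitute the scalar projections defined in \eqref{eq:projection}, giving $\hat{\mathbf e}^{\top}\mathbf r = p_{\mathrm{LA}} - p_{\mathrm{rel}} - v_{\mathrm{rel}}\,t_s$, which yields $u_{\mathrm{req}}$ as stated.

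There is no serious obstacle here: the only subtlety is noting that $\|\mathbf r\|>0$ (given in \eqref{eq:unit-direction}) is what makes $\hat{\mathbf e}$ well defined and guarantees $\hat{\mathbf e}^{\top}\mathbf r = \|\mathbf r\|$ rather than a signed quantity, so the ``magnitude'' interpretation of $u_{\mathrm{req}}$ in \eqref{eq:ureq} is consistent. If $\|\mathbf r\|=0$ then $\mathbf u_k^{*}=\mathbf 0$ trivially and the projection vanishes, so the lemma extends by continuity though it is stated only for the non-degenerate case.
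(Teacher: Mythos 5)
Your proof is correct and follows essentially the same route as the paper's: expand $\|\mathbf e_p\|^2$ as a convex quadratic in $\mathbf u_k$, set the gradient to zero to obtain $\mathbf u_k^{*}=\tfrac{2}{t_s^2}\mathbf r$, and project onto $\hat{\mathbf e}$ to recover \eqref{eq:ureq}. The extra remarks on strict convexity and the degenerate case $\|\mathbf r\|=0$ are harmless additions beyond what the paper states.
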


\begin{proof}
Expand
$\|\mathbf e_p(\mathbf u_k)\|^2=\|\mathbf r\|^2 - t_s^2\,\mathbf r^\top\mathbf u_k + \tfrac{1}{4}t_s^4\|\mathbf u_k\|^2$. 
Setting 
$\nabla_{\mathbf u_k}\|\mathbf e_p\|^2=0$ 
yields 
$\mathbf u_k^{\*}=\tfrac{2}{t_s^2}\mathbf r$.
Projecting onto 
$\hat{\mathbf e}$ 
gives \eqref{eq:ureq}.
\end{proof}

\begin{remark}
If $\mathbf p_{\mathrm{LA}}=\mathbf p_k$ then $\hat{\mathbf e}$ can be any unit vector and
$u_{\mathrm{req}}=0$ for the axial check; the QP \eqref{eq:cost} handles the step.
\end{remark}

\begin{lemma}
With $\hat{\mathbf e}$ from Lemma~\ref{lem:axial}, any control of the form
$\mathbf u_k = u_{\mathrm{req}}\,\hat{\mathbf e}$
with $u_{\mathrm{req}}$ from \eqref{eq:ureq} makes the \emph{axial} one-step position error zero, i.e., 
$\hat{\mathbf e}^{\!\top}\mathbf e_p=0$. 
\end{lemma}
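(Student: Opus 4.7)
The plan is to substitute the proposed control directly into the definition of $\mathbf{e}_p$, take the inner product with $\hat{\mathbf{e}}$, and then verify cancellation using the scalar projection identities already set up in Lemma~\ref{lem:axial}.

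First I would rewrite the one-step position error compactly in terms of $\mathbf{r}$ from \eqref{eq:unit-direction}. Since $\mathbf{e}_p = \mathbf{p}_{\mathrm{LA}} - \mathbf{p}_k - t_s \mathbf{v}_k - \tfrac{1}{2} t_s^2 \mathbf{u}_k$, the definition of $\mathbf{r}$ lets me write simply $\mathbf{e}_p = \mathbf{r} - \tfrac{1}{2} t_s^2 \mathbf{u}_k$. Projecting onto $\hat{\mathbf{e}}$ then gives $\hat{\mathbf{e}}^{\top}\mathbf{e}_p = \hat{\mathbf{e}}^{\top}\mathbf{r} - \tfrac{1}{2} t_s^2\, \hat{\mathbf{e}}^{\top}\mathbf{u}_k$. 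By construction $\hat{\mathbf{e}} = \mathbf{r}/\|\mathbf{r}\|$, so $\hat{\mathbf{e}}^{\top}\mathbf{r} = \|\mathbf{r}\|$.

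Next I would plug in the candidate control $\mathbf{u}_k = u_{\mathrm{req}}\,\hat{\mathbf{e}}$, which yields $\hat{\mathbf{e}}^{\top}\mathbf{u}_k = u_{\mathrm{req}}$ (since $\hat{\mathbf{e}}$ is a unit vector). The remaining task is to identify the expression \eqref{eq:ureq} for $u_{\mathrm{req}}$ with $2\|\mathbf{r}\|/t_s^2$. Using the projections in \eqref{eq:projection}, $p_{\mathrm{LA}} - p_{\mathrm{rel}} - v_{\mathrm{rel}} t_s = \hat{\mathbf{e}}^{\top}(\mathbf{p}_{\mathrm{LA}} - \mathbf{p}_k - t_s \mathbf{v}_k) = \hat{\mathbf{e}}^{\top}\mathbf{r} = \|\mathbf{r}\|$, so $u_{\mathrm{req}} = 2\|\mathbf{r}\|/t_s^2$. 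Substituting into the projected error gives $\hat{\mathbf{e}}^{\top}\mathbf{e}_p = \|\mathbf{r}\| - \tfrac{1}{2} t_s^2 \cdot 2\|\mathbf{r}\|/t_s^2 = 0$.

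There is no real obstacle; the only place where care is needed is the small algebraic step that rewrites the scalar expression \eqref{eq:ureq} as $2\|\mathbf{r}\|/t_s^2$, and the tacit assumption $\|\mathbf{r}\| > 0$ so that $\hat{\mathbf{e}}$ is well defined (the degenerate $\|\mathbf{r}\| = 0$ case is already handled by the remark following Lemma~\ref{lem:axial}). I would conclude by noting that the statement is independent of the orthogonal component of $\mathbf{u}_k$: any control with the prescribed axial component kills the axial error in one step, which is exactly the point the authors will later exploit when comparing $u_{\mathrm{req}}$ with $a_{\max}$ in the reachability margin.
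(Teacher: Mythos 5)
Your proof is correct and follows essentially the same route as the paper's: both project the one-step update onto $\hat{\mathbf e}$ and reduce the claim to the scalar identity $u_{\mathrm{req}}=2\|\mathbf r\|/t_s^2$, the only difference being that you verify the cancellation by substitution while the paper solves the projected equation for $u$. Your closing observation that only the axial component of $\mathbf u_k$ matters is a harmless (and accurate) strengthening consistent with how the lemma is used later.
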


\begin{proof}
Project the discrete update onto 
$\hat{\mathbf e}$: $p_{\mathrm{rel}}+v_{\mathrm{rel}}t_s+\tfrac{1}{2}u t_s^2=p_{\mathrm{LA}}$ 
with $u=\hat{\mathbf e}^{\!\top}\mathbf u_k$. 
Solving for $u$ gives \eqref{eq:ureq}, implying $\hat{\mathbf e}^{\!\top}\mathbf e_p=0$. 
Uniqueness comes from convexity of $\hat{e}^\top e_p$.
\end{proof}

\begin{remark}
With acceleration/speed bounds, the QP generally cannot make the full vector one-step position error zero; that happens only if the unconstrained minimizer stands inside the feasible set.
Our one-step statement is therefore axial: $\hat{\mathbf e}^\top \mathbf e_p=0$;
velocity matching is handled by the $C$ term over subsequent steps.
\end{remark}

Using the discrete update with disturbances $\mathbf n_{p,k}$ and $\mathbf n_{v,k}$,
\begin{align}
\mathbf e_p
&= \mathbf p_{\mathrm{LA}} -
   \Big(\mathbf p_k + \mathbf v_k t_s + \mathbf n_{p,k} t_s
        + \tfrac{1}{2}\mathbf u_k t_s^2 + \tfrac{1}{2}\mathbf n_{v,k} t_s^2\Big)
     \label{eq:ep-noisy-raw}\\
&= \mathbf r - \tfrac{1}{2} t_s^2 \mathbf u_k
   \;-\; \mathbf n_{p,k} t_s \;-\; \tfrac{1}{2} \mathbf n_{v,k} t_s^2,
     \label{eq:ep-noisy-r}
\end{align}
and, by the triangle inequality,
\begin{align}
\|\mathbf e_p\|
&\le \big\|\mathbf r - \tfrac{1}{2} t_s^2 \mathbf u_k \big\|
    \;+\; \|\mathbf n_{p,k}\|\, t_s
    \;+\; \tfrac{1}{2}\|\mathbf n_{v,k}\|\, t_s^2      \\
&\le \big\|\mathbf r - \tfrac{1}{2} t_s^2 \mathbf u_k \big\|
    \;+\; \varepsilon_p t_s \;+\; \tfrac{1}{2}\varepsilon_v t_s^2.
\label{eq:ep-noisy-bound}
\end{align}

\paragraph*{One-step disturbance buffer.}
With noise bounded as $\|\mathbf n_p\|\le\varepsilon_p$ and $\|\mathbf n_v\|\le\varepsilon_v$,
the worst-case one-step change in \emph{position} is
\begin{equation}
    \Delta p_{\max} = \varepsilon_p\,t_s + \tfrac{1}{2}\,\varepsilon_v\,t_s^2.
    \label{eq:pos-buffer}
\end{equation}
Intuitively, $\Delta p_{\max}$ is how far noise might push us in one sample.
To guard against this within a single step, we convert the position buffer into an
\emph{acceleration} buffer:
\begin{equation}
    \sigma = \frac{2\,\Delta p_{\max}}{t_s^2} \;=\; \frac{2\varepsilon_p}{t_s} + \varepsilon_v.
    \label{eq:sigma}
\end{equation}

\begin{theorem}\label{thm:reach}
If the required axial acceleration from \eqref{eq:ureq} satisfies
\begin{equation}
    \delta_k \;\coloneqq\; |u_{\mathrm{req}}| - \big(a_{\max} - \sigma\big) \;\le\; 0,
    \label{eq:reachability}
\end{equation}
then the look-ahead point is reachable in one step under the bounded disturbances.
That is, there exists a command $\mathbf u_k$ with $\|\mathbf u_k\|\le a_{\max}$ such that
$\hat{\mathbf e}^{\!\top}\mathbf e_p=0$.
\end{theorem}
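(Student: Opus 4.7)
My plan is to solve the noisy axial one-step update for the exact scalar acceleration that zeros $\hat{\mathbf{e}}^\top \mathbf{e}_p$, then bound its magnitude using the noise bounds and the hypothesis $\delta_k\le 0$ to show it fits inside the acceleration ball.

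First I would take the noisy prediction \eqref{eq:ep-noisy-raw}, project it onto $\hat{\mathbf{e}}$, and impose $\hat{\mathbf{e}}^\top \mathbf{e}_p=0$. Using the scalar projections from \eqref{eq:projection}, this reduces to a single scalar equation in $u := \hat{\mathbf{e}}^\top\mathbf{u}_k$, whose unique solution is $u = u_{\mathrm{req}} - \hat{\mathbf{e}}^\top\!\bigl(\tfrac{2}{t_s}\mathbf{n}_{p,k}+\mathbf{n}_{v,k}\bigr)$, with $u_{\mathrm{req}}$ from \eqref{eq:ureq}. Taking the purely axial command $\mathbf{u}_k = u\,\hat{\mathbf{e}}$ then forces the axial one-step error to vanish by the same projection-plus-solve step that proved the preceding lemma.

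Next I would bound $|u|$ via the triangle inequality together with $|\hat{\mathbf{e}}^\top \mathbf{n}|\le\|\mathbf{n}\|$ and the noise bounds \eqref{eq:noise-bounds}, obtaining $|u|\le |u_{\mathrm{req}}| + \tfrac{2\varepsilon_p}{t_s} + \varepsilon_v = |u_{\mathrm{req}}| + \sigma$ by \eqref{eq:sigma}. The hypothesis $\delta_k\le 0$ rearranges to $|u_{\mathrm{req}}| + \sigma \le a_{\max}$, so $\|\mathbf{u}_k\| = |u| \le a_{\max}$, which exhibits the admissible command required by the theorem.

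The main obstacle is framing the reachability claim precisely. The candidate $u$ above depends on the realized $\mathbf{n}_{p,k},\mathbf{n}_{v,k}$, so the theorem is a one-step set-reachability statement: for every admissible noise realization there exists an admissible $\mathbf{u}_k$ landing on the axial target. The quantity $\sigma$ is exactly the worst-case axial disturbance-equivalent acceleration that the causal online QP must trade away from $a_{\max}$ to retain that guarantee. I would also flag that the speed constraint in \eqref{eq:cost} is not addressed here; it is inherited from the offline passability check along $\mathbf{p}_{\mathrm{ref}}$ rather than from this one-step reachability argument.
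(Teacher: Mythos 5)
Your proof is correct and follows essentially the same route as the paper's: the paper posits an axial correction $\eta$ with $|\eta|\le\sigma$ added to $u_{\mathrm{req}}\hat{\mathbf e}$ and invokes $|u_{\mathrm{req}}|+\sigma\le a_{\max}$, which is exactly your argument with the correction term $-\hat{\mathbf e}^{\top}\bigl(\tfrac{2}{t_s}\mathbf n_{p,k}+\mathbf n_{v,k}\bigr)$ left implicit. Your version is slightly more rigorous in that it solves for the noise-dependent command explicitly and states the "for every noise realization there exists an admissible $\mathbf u_k$" reading, both of which the paper glosses over.
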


\begin{proof}
By the \eqref{eq:ep-noisy-bound}, the noise can shift the one-step position by at most
$\Delta p_{\max}$. An extra axial term $\eta$ with $|\eta|\le\sigma$ (from \eqref{eq:sigma})
is enough to cancel that shift in one step. If $|u_{\mathrm{req}}|\le a_{\max}-\sigma$, choose
$\mathbf u_k=(u_{\mathrm{req}}+\eta)\,\hat{\mathbf e}$; this respects $\|\mathbf u_k\|\le a_{\max}$
and achieves axial landing. This proves sufficiency. The condition is not necessary:
if the actual noise is smaller than the worst case, landing may still occur even when $\delta_k>0$.
\end{proof}

\vspace{0em}
\subsubsection*{Reachable case ($\delta_k\le 0$)}
When the look-ahead target is reachable in one step, there is no need to chase velocity. The one-step prediction is
\begin{equation}
  \mathbf{p}_{k+1}=\mathbf{p}_k + t_s\,\mathbf{v}_k + \tfrac{1}{2}t_s^2\,\mathbf{u}_k.
  \label{eq:one-step-pos}
\end{equation}
To land exactly on the look-ahead point in the nominal (noise-free, unconstrained) model, set
$\mathbf{p}_{k+1}=\mathbf{p}_{\mathrm{LA}}$ and solve for $\mathbf{u}_k$:
\begin{equation}
    \mathbf{u}_k \;=\; \frac{2}{t_s^2}\Big(\mathbf{p}_{\mathrm{LA}} - \mathbf{p}_k - t_s\,\mathbf{v}_k\Big),
  \label{eq:one-step-acc}
\end{equation}
which yields $\mathbf e_p(\mathbf u_k)=\mathbf 0$.

If the limits in \eqref{eq:cost} are active, the QP returns the projection of \eqref{eq:one-step-acc} onto the feasible set, so a small residual in $\mathbf e_p$ may remain; any velocity mismatch is handled by the $C_k$ term over subsequent steps. Under $\delta_k\le 0$, there also exists an axial adjustment $\eta$ with $|\eta|\le\sigma$ (Theorem~\ref{thm:reach}) so that $\mathbf u_k=(u_{\mathrm{req}}+\eta)\,\hat{\mathbf e}$ keeps $\|\mathbf u_k\|\le a_{\max}$ and cancels the axial error despite bounded noise.

\begin{remark}
    The optimal solution,
     $   u_k^* = \frac{2}{t_s^2}\mathbf{r},$
    also minimizes velocity error $||\mathbf{e}_v||^2$ if the current state matches the reference trajectory and $t_s$ is small.
\end{remark}

\begin{proof}
    The velocity error is,
    \begin{equation}
        e_v= v_{LA} - v_k - \frac{2}{t_s}(p_{LA}-p_k-t_sv_k).
    \end{equation}
    Let $s_0$ be the projected distance in \eqref{eq:closest-point}.
    By our premise $p_k = p_{ref}(s_0)$ and $v_k = v_{ref}(s_0)$, and $v_{ref}(s_0) = v_{LH}$ by definition.
    This implies
    \begin{equation}
        e_v = -2 \Big(\frac{p_{LA} - p_k}{t_s} - v_k\Big).
    \end{equation}
    In the limit $t_s\to0$, this becomes
        $e_v = -2(v_k - v_k) = 0.$
\end{proof}

Remark 3 is significant, because it implies that the system is capable of perfectly tracking the trajectory in the deterministic case.

\vspace{0em}
\subsubsection*{Unreachable case ($\delta_k>0$)}
When the target (look-ahead point) is not reachable in one step, matching some of the \emph{phase} (velocity) helps us re-enter the tube smoothly. 

When the target is not one-step reachable, aligning only the axial component can leave
tangential mismatch. It is cleaner to work directly with the full vectors
\begin{equation}
    \mathbf e_p(\mathbf u)=\mathbf r-\tfrac{1}{2}t_s^2\mathbf u,
    \qquad
    \mathbf e_v(\mathbf u)=\mathbf d_v - t_s\mathbf u,\quad
    \mathbf d_v\coloneqq \mathbf v_{\mathrm{LA}}-\mathbf v_k.
\end{equation}
We minimize 
\begin{equation}
    J(\mathbf u;C)=\|\mathbf e_p(\mathbf u)\|^2+C\|\mathbf e_v(\mathbf u)\|^2
\end{equation}
subject to the same magnitude limits as the QP.

\begin{lemma}
The unique unconstrained minimizer of $J(\mathbf u;C)$ is
\begin{equation}
    \mathbf u^{*}(C)
    = \frac{\mathbf r + \tfrac{2C}{t_s}\,\mathbf d_v}{\tfrac{1}{2}t_s^2 + 2C}.
\label{eq:u-star-C}
\end{equation}
If $\mathbf u^{*}(C)$ satisfies $\|\mathbf u^{*}(C)\|\le a_{\max}$ and 
$\|\mathbf v_k+t_s\mathbf u^{*}(C)\|\le v_{\max}$, it is also the QP solution.
\end{lemma}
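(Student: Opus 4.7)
The plan is to recognize that $J(\mathbf{u};C)$ is a strictly convex quadratic in $\mathbf{u}$ on all of $\mathbb{R}^3$, so the unconstrained minimizer is the unique stationary point, and then to check that feasibility of that stationary point promotes it to the constrained minimizer via standard convex-QP arguments (KKT with zero multipliers).

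First I would expand $J$. Writing
\begin{equation}
J(\mathbf{u};C) = \bigl\|\mathbf{r} - \tfrac{1}{2}t_s^2\mathbf{u}\bigr\|^2 + C\bigl\|\mathbf{d}_v - t_s\mathbf{u}\bigr\|^2,
\end{equation}
the Hessian is $\nabla^2 J = \bigl(\tfrac{1}{2}t_s^4 + 2Ct_s^2\bigr)\mathbf{I}$, which is positive definite whenever $t_s>0$ (and $C\ge 0$). This establishes strict convexity and the uniqueness claim simultaneously.

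Next I would compute the gradient term by term: $\nabla_\mathbf{u}\|\mathbf{e}_p\|^2 = -t_s^2\mathbf{r} + \tfrac{1}{2}t_s^4\mathbf{u}$ and $\nabla_\mathbf{u}\|\mathbf{e}_v\|^2 = -2t_s\mathbf{d}_v + 2t_s^2\mathbf{u}$. Setting $\nabla J = \mathbf{0}$ and collecting gives
\begin{equation}
\bigl(\tfrac{1}{2}t_s^4 + 2Ct_s^2\bigr)\mathbf{u} \;=\; t_s^2\mathbf{r} + 2Ct_s\,\mathbf{d}_v,
\end{equation}
and dividing through by $t_s^2>0$ yields the stated closed form \eqref{eq:u-star-C}. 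Strict convexity then makes this stationary point the unique global minimizer.

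For the second clause, I would appeal to convexity of the feasible set in Problem~\ref{prob:qp}: the acceleration ball $\{\mathbf{u}:\|\mathbf{u}\|\le a_{\max}\}$ and the one-step speed ball $\{\mathbf{u}:\|\mathbf{v}_k+t_s\mathbf{u}\|\le v_{\max}\}$ are both convex, so their intersection is convex. If $\mathbf{u}^*(C)$ lies in this intersection, then $(\mathbf{u}^*(C),\boldsymbol{\lambda}=\mathbf{0})$ satisfies the KKT conditions with strictly complementary inactive constraints, and by convexity KKT is sufficient for global optimality of the QP. The only delicate point, and the one I would flag carefully, is the degenerate case $t_s=0$ or the limit $C\to\infty$, where the denominator or numerator must be interpreted as a limit; for the operative range $t_s>0$ and $C\ge 0$ finite, no such subtlety arises and the formula is well defined.
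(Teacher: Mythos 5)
Your proof is correct and follows the same core route as the paper's: compute $\nabla_{\mathbf u}J$, set it to zero, and solve for $\mathbf u$, which yields \eqref{eq:u-star-C}. You are in fact more complete than the paper, which omits both the positive-definite Hessian argument for uniqueness and the KKT/convexity justification of the second clause (that feasibility of $\mathbf u^*(C)$ makes it the QP solution); both of your additions are standard and correct.
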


\begin{proof}
Set $\nabla_{\mathbf u}J(\mathbf u;C)=\mathbf 0$:
$-t_s^2(\mathbf r-\tfrac{1}{2}t_s^2\mathbf u)-2Ct_s(\mathbf d_v-t_s\mathbf u)=\mathbf 0$,
then solve for $\mathbf u$.
\end{proof}

If either limit is active, the QP solution lies on the boundary. The stationarity
condition becomes
\begin{equation}
    \nabla_{\mathbf u}J(\mathbf u;C) + 2\lambda_1\,\mathbf u + 2\lambda_2\,t_s(\mathbf v_k+t_s\mathbf u)=\mathbf 0,
\label{eq:kkt-stationarity}
\end{equation}
with Lagrange multipliers $\lambda_1,\lambda_2\ge 0$ that enforce the constraints
$\lambda_1(\|\mathbf u\|^2-a_{\max}^2)=0$, 
$\lambda_2(\|\mathbf v_k+t_s\mathbf u\|^2-v_{\max}^2)=0$,
and the two norm constraints. 
Intuitively: start from $\mathbf u^{*}(C)$ and ``push'' it to the nearest point on the
active boundary that satisfies \eqref{eq:kkt-stationarity}.

We choose $C$ to balance how much position we fix versus how much velocity we line up in vector form. Evaluate the residuals at the current QP solution $\mathbf u_k$:
\begin{equation}
    \mathbf e_p=\mathbf r-\tfrac{1}{2}t_s^2\mathbf u_k,\qquad
\mathbf e_v=\mathbf d_v - t_s\mathbf u_k.
\end{equation}
so; 
\begin{equation}
    C_{\mathrm{KKT}} = \frac{t_s\,\|\mathbf e_p\|}{\,2\,\|\mathbf e_v\|}
    \qquad , \|\mathbf e_v\| \ne0.
\label{eq:C-KKT}
\end{equation}

To avoid large swings (e.g., when $\|\mathbf e_v\|$ is small), we bias and smooth:
\begin{equation}
    C_k=\min\Big\{(1-\beta)\,C_{k-1}+\beta\,\rho\,C_{\mathrm{KKT}},\;C_{\max}\Big\}.
    \label{eq:c-next}
\end{equation}

Here $\rho\in(0,1)$ when $\delta_k>0$ (favor matching some velocity), and $\rho=1$ when
$\delta_k\le 0$ (we can afford to emphasize position). This keeps one small QP, and
naturally reduces velocity mismatch once we are near the reachable tube.

\vspace{0em}
\begin{remark}
    If a safety or interaction trigger occurs (e.g., force threshold or E-stop), the controller enters a safe mode and pauses motion until the condition clears. After the clear signal, we resume the same online QP from the current $(\mathbf p_k,\mathbf v_k)$ \emph{without replanning the path}.
    The reachability check and the adaptive weight $C_k$ handle the transient: if the look-ahead target is reachable in one step, the QP cancels the position error; if not, it blends a velocity term to re-enter the tube smoothly under the bounds.
\end{remark}

\vspace{0em}
\subsection*{Offline reachability} \label{Subsec:offline-screen}
Before we run, we scan the fixed path once to flag geometry that would demand
too much axial acceleration in a single step.
We use the same building blocks as online, so the reasoning is consistent.

Choose the sample time $t_s$, limits $(a_{\max},v_{\max})$, and disturbance bounds
$(\varepsilon_p,\varepsilon_v)$. Compute the one-step acceleration buffer

\begin{equation}
    \sigma=\frac{2\varepsilon_p}{t_s}+\varepsilon_v
\quad\text{(from \eqref{eq:sigma})}.
\end{equation}
For each path sample $s$:
\begin{enumerate}[leftmargin=*]
\item Form the look-ahead pair from \eqref{eq:LA-p&v} with
$(\mathbf p_k,\mathbf v_k)\!\leftarrow\!(\mathbf p_{\mathrm{ref}}(s),\mathbf v_{\mathrm{ref}}(s))$.
\item Build $\mathbf r=\mathbf p_{\mathrm{LA}}-\mathbf p_k-t_s\mathbf v_k$, set
$\hat{\mathbf e}=\mathbf r/\|\mathbf r\|$ (if $\|\mathbf r\|=0$, skip this sample), and compute
\[
u_{\mathrm{req}}(s)=\frac{2}{t_s^2}\,\|\mathbf r\|.
\]
\item Compute the robust margin
\[
\delta(s)=|u_{\mathrm{req}}(s)|-\big(a_{\max}-\sigma\big).
\]
\end{enumerate} 

If $\delta(s)\le 0$, the look-ahead landing is feasible in one step even under worst-case noise (sufficient condition).
If $\delta(s)>0$, we mark that interval as ``unsafe''. 
Online, this is a cue to shrink look-ahead or apply a local speed cap; we do not replan the path.

This quick screen catches sharp curvatures or tight timing segments that would otherwise trigger frequent infeasible axial demands. Because it shares the same $\sigma$ and $u_{\mathrm{req}}$ logic as the online controller, what we see offline matches what we
enforce at runtime.

\section{Simulation Setup} \label{sec:setup}
\subsubsection*{Reference trajectory generation}
We build a 2D reference path from waypoints and fit a smooth cubic ($C^2$) spline through them. To simulate realistic task–space motions, we place the waypoints in random directions inside a fixed rectangular workspace of size $L_x \times L_y$. Let $n_{\mathrm{wp}}$ be the number of random waypoints. Set the first waypoint at the origin, $w_0=(0,0)^\top$. For $i=1,\dots,n_{\mathrm{wp}}$, draw a direction angle $\theta_i$ uniformly in $[0,2\pi)$ and place
\[
w_i \;=\; \begin{bmatrix} L_x\cos\theta_i \\ L_y\sin\theta_i \end{bmatrix}.
\]
This gives $n_{\mathrm{wp}}$ randomly oriented waypoints in the workspace $L_x \times L_y$. 
To avoid overly sharp turns or dense clusters, we rely on the cubic spline’s smoothness and chord-length timing to distribute curvature and pace along the path. 

For $i=1,\dots,n_{\mathrm{wp}}$, compute segment lengths
\begin{equation}
    d_i \;=\; \|p_i - p_{i-1}\|, \qquad s_i=\sum_{j=1}^{i} d_j,\quad s_0=0,
\end{equation}
Then assign time to each segment in proportion to its length (“chord–length timing”). Let $p_i$ be the $i$th waypoint and be the segment length. With total duration $T$, the time assigned to segment $i$ is
\begin{equation}
    t_i \;=\; T \cdot \frac{d_i}{\sum_{j=1}^N d_j}.
\end{equation}

The cumulative times $\{0,\; t_1,\; t_1{+}t_2,\dots\}$ serve as spline breakpoints. This spreads curvature over time and reduces acceleration/jerk spikes.

From the spline we evaluate continuous position, velocity, and acceleration, denoted $\mathbf p_{\mathrm{ref}}(\cdot)$, $\mathbf v_{\mathrm{ref}}(\cdot)$, and $\mathbf a_{\mathrm{ref}}(\cdot)$. For fast lookup we resample them on a dense grid to form arrays used by the controller. Online, the tracker finds the closest point on the path and the look-ahead target using \eqref{eq:closest-point}–\eqref{eq:LA-p&v}; it does not follow a fixed time law.

\begin{remark}
Cubic fitting plus chord-length timing yields smooth position and velocity with moderated acceleration. If even smoother motion is needed, $T$ can be increased slightly or the waypoint set can be lightly smoothed before spline fitting.
\end{remark}

We compare the controllers' performance using 
\begin{equation}
    \mathrm{RMSE} = \sqrt{\frac{1}{N}\sum_{k=1}^{N} \big\|\mathbf{x}(k)-\mathbf{x}_{\mathrm{LA}}(k)\big\|^{2}},
    \label{eq:RMSE}
\end{equation}
where $\mathbf{x}$ corresponds to either the position or velocity, and $N$ is the number of runs.
We also consider the mean rachability margin,
%
    $\bar{\delta} = \frac{1}{N} \sum_{k=1}^N \delta_k
    \label{eq:delta-mean}$
using \eqref{eq:reachability}.
Metrics are computed over the motion period only, excluding the freeze interval.

The simulation parameters are available online\footnote{\url{https://github.com/Hossein-ghd/Reachability-Check-QP}}.
The values of $t_s$, $v_{\max}$, and $a_{\max}$ come directly from the UR5e robot limits, since our controller runs in Cartesian space and we wanted the setup to be realistic. The other parameters such as $\rho$, $\beta$, and the bounds on $C$ were chosen from experiments and tuning. They control how fast the adaptive weight reacts and how conservative the reachability margin is. In this way, the table gives both the physical constraints of the system and the control logic settings we used.



\section{Results} \label{sec:results}
We evaluate the proposed QP+reachability tracker against a geometric pure–pursuit (PP) baseline under the same limits and sampling period $t_s$. Both methods use the same closest–point and look–ahead rules \eqref{eq:closest-point}–\eqref{eq:LA-p&v}; PP steers toward the look–ahead with a PD-like term and enforces bounds only by post-hoc clipping. Each trial contains one $1.0$\,s freeze inserted at a uniformly random time, as described in Sec.~\ref{sec:setup}. For aggregate plots we remove the frozen portion and normalize the remaining “moving” time to $T\in[0,1]$.

Fig.~\ref{fig:qp_pp_behaviour} shows one representative run on a random spline; colors are fixed across all comparison plots: blue = QP+reachability, orange = pure pursuit. In this plot, the reference (black dashed) is tracked by the proposed method and PP. The QP+reachability trajectory stays close to the curve and rejoins smoothly after the pause, whereas PP overshoots and accumulates timing mismatch around high–curvature segments. The look-ahead targets and current poses (markers inherit the line color) illustrate that the proposed controller lands near the look-ahead in one step when reachable and otherwise aligns velocity to re-enter the tube. The proposed method stays close to the reference and rejoins the path smoothly after a pause.

\begin{figure}[ht]
    \centering
    \includegraphics[width=0.95\columnwidth]{{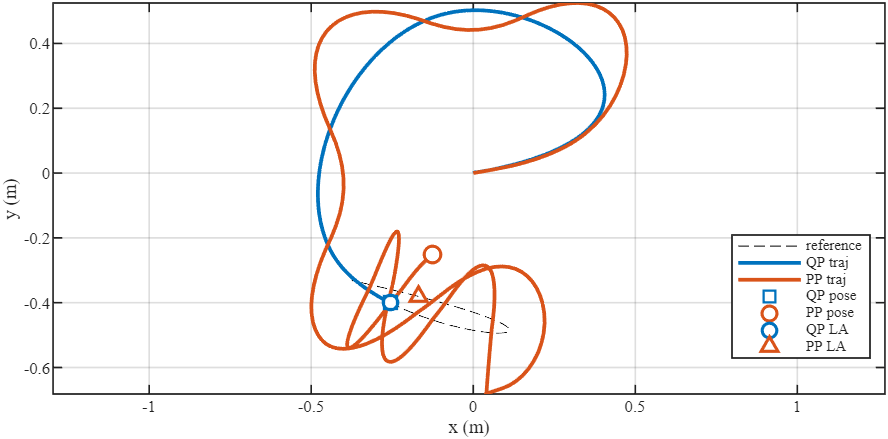}}
    \caption{Single-run comparison on a random spline for pure pursuit and our proposed controller.}
    \label{fig:qp_pp_behaviour}
\end{figure}

Over $50$ random paths (with one freeze each), the proposed method achieves uniformly lower tracking error than PP. 
The separation is most visible around mid-run, where many trials resume from a pause or encounter sharp turns. This supports the role of the one-step reachability test (to avoid infeasible axial demands) and the adaptive velocity term (to reduce timing mismatch when the target is not reachable in one step).


We quantify how often the commanded motion would exceed the acceleration bound using the one-step reachability margin $\delta=|u_{\mathrm{req}}|-a_{\max}$ from~\eqref{eq:ureq}. Fig.~\ref{fig:mean_delta_hist} shows the distribution of the per-run mean margin (moving samples only). QP+reachability is concentrated well below $0$ (inside the limit), while PP exhibits high positive values, consistent with enforcing bounds only by a posteriori saturation. So, the proposed method stays inside the limit with margin; PP often demands super-limit acceleration.

\begin{figure}[ht]
    \centering
    \includegraphics[width=0.85\columnwidth]{{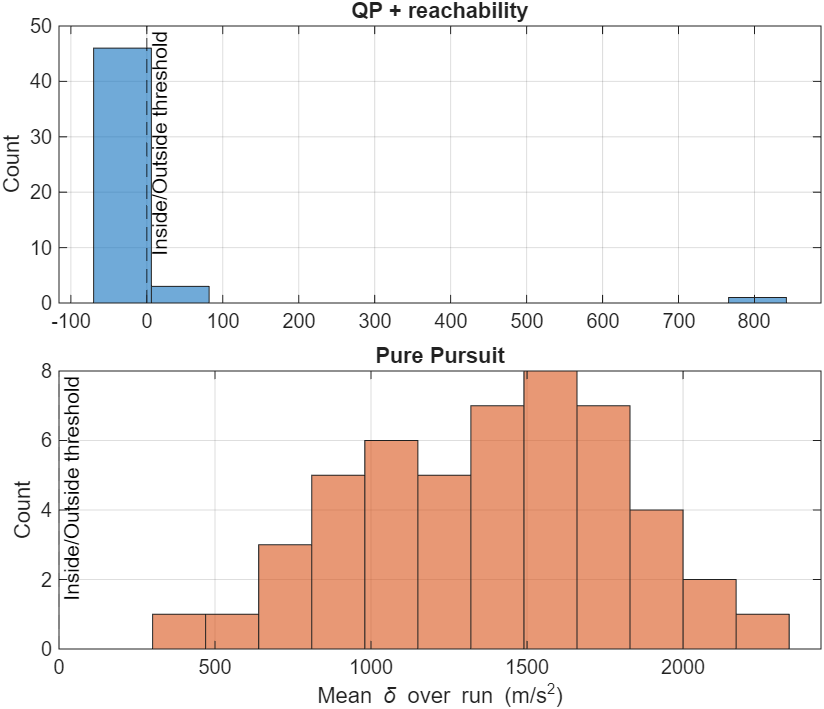}}
    \caption{Distribution of per-run mean reachability margin $\delta$ (moving samples). Dashed line: $\delta=0$. Top: QP+reachability; bottom: pure pursuit. Note the different axis scales.}
    \label{fig:mean_delta_hist}
\end{figure}

The temporal evolution in Fig.~\ref{fig:delta_time} further highlights the difference. For QP+reachability (top), $\delta(T)$ remains negative with small magnitude over the entire motion and only brief near-zero crossings around sharp geometric transitions; for PP (bottom), $\delta(T)$ is frequently and substantially positive, indicating repeated infeasible axial demands ($a_{\max}$) that must be clipped by saturation. A short initial transient appears because runs start from rest.

\begin{figure}[ht]
    \centering
    \includegraphics[width=0.9\columnwidth]{{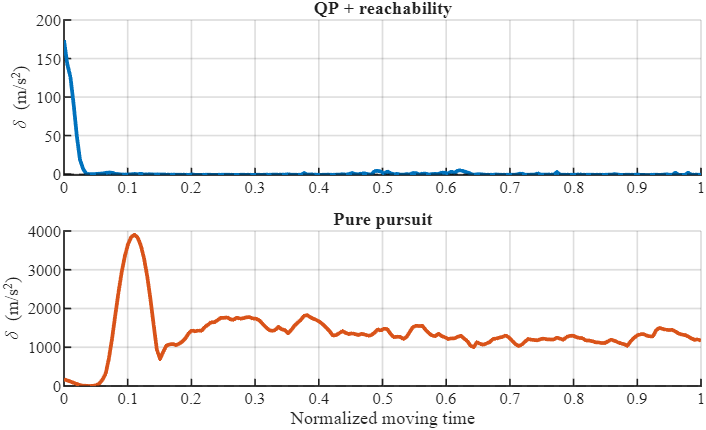}}
    \caption{Mean reachability margin $\delta(T)$ (average across runs) vs.\ normalized moving time. Top: QP+reachability; bottom: pure pursuit.}
    \label{fig:delta_time}
\end{figure}

Finally, Fig.~\ref{fig:pv_time} illustrates pause–and–resume with the proposed controller alone. It plots $x$/$y$ position (top) and velocity (bottom) for a $10$\,s run where the $1$\,s freeze interval is \emph{removed} (yielding $9$\,s on the time axis). The vertical dashed line marks the instant at which the freeze occurred in real time; the short gap is removed so the curves show only ``moving'' samples.
The short disturbance at the dashed line (when the freeze occurred in real time) comes from stitching out the one-second stop; on either side, the time histories are smooth. This plot demonstrates the intended logic: (i) brake-to-rest during a stop; (ii) resume with the same path and per-sample feasibility; and (iii) quickly restore timing mismatch via the adaptive position/velocity trade–off (inside the tube, the update drives $C$ toward its minimum).

\begin{figure}[ht]
    \centering
    \includegraphics[width=0.9\columnwidth]{{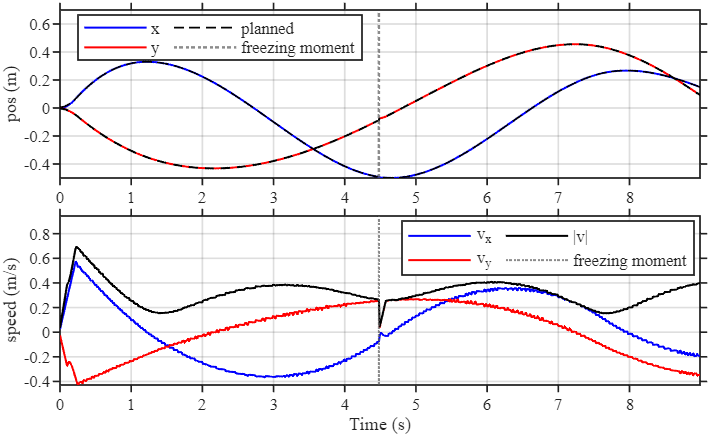}}
    \caption{Per-axis position and velocity with the $1$\,s freeze removed (time axis shows $9$\,s of motion).}
    \label{fig:pv_time}
\end{figure}

Table~\ref{tab:rmse} reports aggregate RMSE over the moving window (mean\,$\pm$\,std across 50 runs). Relative to PP, QP+reachability reduces position RMSE by $\approx\!95.5\%$ and velocity RMSE by $\approx\!81.7\%$.

\begin{table}[ht]
    \centering
    \caption{Aggregate RMSE over moving time (mean\,$\pm$\,std across 50 runs).}
    \label{tab:rmse}
    \begin{tabular}{lcc}
    \hline
    Metric & QP+reachability & Pure pursuit \\
    \hline
    Position RMSE [m]   & 0.0040 $\,\pm\,$ 0.0145 & 0.0896 $\,\pm\,$ 0.0275 \\
    Velocity RMSE [m/s] & 0.13   $\,\pm\,$ 0.25   & 0.71   $\,\pm\,$ 0.20   \\
    \hline
    \end{tabular}
\end{table}

Overall, when the look–ahead point is one-step reachable, the position-dominant behavior (small $C$) keeps accelerations within bounds while snapping toward the target; when it is not, the adaptive weight biases the QP to match velocity along the axial direction, reducing timing mismatch and shortening time spent outside the tube. Because all decisions are in output space with explicit speed/acceleration constraints, performance is not sensitive to path timing and remains feasible by construction, explaining the consistently lower errors and strongly negative margins observed across figures.

\section{Conclusion and Outlook} \label{sec:conclusion}
We proposed an output–space tracker that couples an offline feasibility screen with a per-sample QP enforcing speed/acceleration bounds. A one-step reachability margin with an adaptive position/velocity weight selects snap-to-position versus timing match and handles freeze–resume without replanning. Across random paths with a single freeze, the method reduced RMS position/velocity errors and acceleration-limit violations relative to pure pursuit at real-time rates. Looking ahead, we will (i) handle state- and path-dependent limits, (ii) extend to higher-order (jerk-limited) dynamics, and (iii) integrate a safety envelope (e.g., barrier constraints) within the same QP.

\bibliographystyle{IEEEtran}
\bibliography{references}

\end{document}